\newcommand{\dts}{\mathcal{T}}
\newcommand{\ap}{AP}
\newcommand{\strat}{C}
\newcommand{\ba}{\mathcal{B}}
\newcommand{\product}{\mathcal{P}}
\newcommand{\scc}{\mathcal{U}}
\newcommand{\SCC}{\mathrm{SCC}}
\newcommand{\ASCC}{\mathrm{ASCC}}
\newcommand{\runinf}{\mathrm{inf}}
\newcommand{\run}{\mathrm{Run}}
\newcommand{\runfin}{\mathrm{Run}_\mathrm{fin}}
\newcommand{\nat}{\mathbb{N}}
\newcommand{\G}{\mathbf{G}}
\newcommand{\F}{\mathbf{F}}
\newcommand{\X}{\mathbf{X}}
\newcommand{\U}{\mathbf{U}}
\newcommand{\cyc}{\mathsf{cyc}}
\newcommand{\pen}{g}
\newcommand{\penexp}{\pen_{\mathrm{exp}}}
\newcommand{\pensim}{\pen_{\mathrm{sim}}}
\newcommand{\vis}{v}
\newcommand{\Vis}{\mathrm{Vis}}
\newcommand{\pisur}{\pi_{\mathrm{sur}}}
\newcommand{\sur}{\mathrm{sur}}
\newcommand{\cycles}{\sharp}
\newcommand{\stratoff}{\strat_{\product}}
\newcommand{\straton}{\mathbf{\strat}_{\product}}
\newcommand{\pr}{\mathrm{Pr}}
\newcommand{\seq}{\mathrm{run}}
\newcommand{\intdiv}{\,\,\, \mathrm{div}\,}
\newcommand{\intmod}{\,\,\, \mathrm{mod}\,}
\newcommand{\ie}{{\it i.e., }}
\newcommand{\eg}{{\it e.g., }}
\newtheorem{remark}{Remark}
\newtheorem{definition}{Definition}
\newtheorem{problem}{Problem}
\newtheorem{theorem}{Theorem}
\newtheorem{proposition}{Proposition}
\title{\LARGE \bf
Optimal Receding Horizon Control for Finite Deterministic Systems \\ with Temporal Logic Constraints
}
\author{M\'{a}ria Svore\v{n}ov\'{a}, Ivana \v{C}ern\'{a} and Calin Belta
\thanks{M. Svore\v{n}ov\'{a}, I. \v{C}ern\'{a} are with Faculty of Informatics, Masaryk University, Brno, Czech Republic, {\tt\footnotesize svorenova@mail.muni.cz, cerna@muni.cz}. C. Belta is with Department of Mechanical Engineering and the Division of Systems Engineering, Boston University, Boston, MA, USA, {\tt\footnotesize cbelta@bu.edu}. This work was partially supported at MU by grants GAP202/11/0312, LH11065, at BU by ONR grants MURI N00014-09-1051, MURI N00014-10-10952 and by NSF grant CNS-1035588.} 
}
\begin{document}

\maketitle
\thispagestyle{empty}
\pagestyle{empty}


\begin{abstract}
In this paper, we develop a provably correct optimal control strategy for a finite deterministic transition system. By assuming that penalties with known probabilities of occurrence and dynamics can be sensed locally at the states of the system, we derive a receding horizon strategy that minimizes the expected average cumulative penalty incurred between two consecutive satisfactions of a desired property. At the same time, we guarantee the satisfaction of correctness specifications expressed as Linear Temporal Logic formulas. We illustrate the approach with a persistent surveillance robotics application.
\end{abstract}


\section{Introduction}

Temporal logics, such as Computation Tree Logic (CTL) and Linear Temporal Logic (LTL), have been customarily used to specify the correctness of computer programs and digital circuits modeled as finite-state transition systems \cite{baierBook}. The problem of analyzing such a model against a temporal logic formula, known as formal analysis or model checking, has received a lot of attention during the past thirty years, and several efficient algorithms and software tools are available \cite{DiVinE, PRISM, NuSMV}. The formal synthesis problem, in which the goal is to design or control a system from a temporal logic specification, has not been studied extensively until a few years ago. Recent results include the use of model checking algorithms for controlling deterministic systems \cite{janasteve11}, automata games for controlling non-deterministic systems \cite{chatterjeeNTSpo}, linear programming and value iteration for synthesis of control policies for Markov decision processes \cite{baierBook,dennismdpCDC11}. Through the use of abstractions, such techniques have also been used for infinite systems, such as continuous and discrete-time linear systems \cite{Tabuada-TAC,Ebru-HSCC,Nok,Marius-LTL-TAC,Jana-Boyan-TAC}.     

The connection between optimal and temporal logic control is an intriguing problem with a potentially high impact in several applications. By combining these two seemingly unrelated areas, our goal is to optimize the behavior of a system subject to correctness constraints. Consider, for example, a mobile robot involved in a persistent surveillance mission in a dangerous area and under tight fuel / time constraints. The correctness requirement is expressed as a temporal logic specification, \eg ``Alternately keep visiting $A$ and $B$ while always avoiding $C$", while the resource constraints translate to minimizing a cost function over the feasible trajectories of the robot. While optimal control is a mature discipline and formal synthesis is fairly well understood, optimal formal synthesis is a largely open area. 

In this paper, we focus on finite labeled transition systems and correctness specifications given as formulas of LTL. We assume there is a penalty associated with the states of the system with a known occurrence probability and time-behavior. Motivated by persistent surveillance robotic missions, our goal is to minimize the expected average cumulative penalty incurred between two consecutive satisfactions of a desired property associated with some states of the system, while at the same time satisfying an additional temporal logic constraint. Also from robotics comes our assumption that actual penalty values can only be sensed locally in a close proximity from the current state during the execution of the system. We propose two algorithms for this problem. The first operates offline, \ie without executions of the system, and therefore only uses the known probabilities but does not exploit actual 
penalties sensed during the execution. The second algorithm designs an online 
strategy by locally improving the offline strategy based on local sensing and simulation over a user-defined planning horizon. While both algorithms guarantee optimal expected average penalty collection, in real executions of the system, the second algorithm provides lower real average than the first algorithm. We illustrate these results on a robotic persistent surveillance case study.  

This paper is closely related to \cite{dennisACC12,majajanaCDC12,janasteve11}, which also focused on optimal control for finite transitions systems with temporal logic constraints. In \cite{janasteve11}, the authors developed an offline control strategy minimizing the maximum cost between two consecutive visits to a given set of states, subject to constraints expressed as LTL formulas. Time-varying, locally sensed rewards were introduced in \cite{dennisACC12}, where a receding horizon control strategy maximizing rewards collected locally was shown to satisfy an LTL specification.
This approach was 
generalized in \cite{majajanaCDC12} to allow for a broader class of optimization objectives and reward models. In contrast with \cite{dennisACC12,majajanaCDC12}, we interpret the dynamic values appearing in states of the system as penalties instead of rewards, \ie in our case, the cost function is being minimized rather than maximized. 
That allows the existence of the optimum in expected average penalty collection. In this paper, we show how it can be achieved using automata-based approach and game theory results.

In Sec.~\ref{sec:prelim}, we introduce the notation and definitions necessary throughout the paper. The problem is stated in Sec.~\ref{sec:pf}.
The main results of the paper are in Sec.~\ref{sec:solution} and Sec.~\ref{sec:discussion}. The simulation results are presented in Sec.~\ref{sec:casestudy}.

%
%
%


\section{Preliminaries}
\label{sec:prelim}

For a set $\mathsf{S}$, we use $\mathsf{S}^{\omega}$ and $\mathsf{S}^+$ to denote the set of all infinite and all non-empty finite sequences of elements of $\mathsf{S}$, respectively. For a finite or infinite sequence $\alpha=\mathsf{a_0a_1}\ldots$, we use $\alpha(i)=\mathsf{a_i}$ to denote the $i$-th element 
and $\alpha^{(i)}=\mathsf{a_0\ldots a_i}$ for the finite prefix of $\alpha$ of length $|\alpha^{(i)}|=i+1$.



\begin{definition}
A weighted deterministic transition system (TS) is a tuple $\dts=(S,T,\ap,L,w)$, where $S$ is a non-empty finite set of states, $T \subseteq S\times S$ is a transition relation, $\ap$ is a finite set of atomic propositions, $L\colon S \to 2^{\ap}$ is a labeling function and $w\colon T\to \nat$ is a weight function. We assume that for every $s\in S$ exists $s'\in S$ such that $(s,s')\in T$. 
An initialized transition system is a TS $\dts=(S,T,\ap,L,w)$ with a distinctive initial state $s_{init}\in S$.
\end{definition}


A run of a TS $\dts$ is an infinite sequence $\rho = s_{0} s_{1}\ldots \in S^{\omega}$ such that for every $i\geq 0$ it holds $(s_{i},s_{i+1})\in T$. We use $\runinf(\rho)$ to denote the set of all states visited infinitely many times in the run $\rho$ and $\run^{\dts}(s)$ for the set of all runs of $\dts$ that start in $s\in S$. Let $\run^{\dts}=\bigcup_{s\in S}\run^{\dts}(s)$. A finite run $\sigma=s_0\ldots s_n$ of $\dts$ is a finite prefix of a run of $\dts$ and $\runfin^{\dts}(s)$ denotes the set of all finite runs of $\dts$ that start in $s\in S$. Let $\runfin^{\dts}=\bigcup_{s\in S}\runfin^{\dts}(s)$. The length $|\sigma|$, or number of stages, of a finite run $\sigma=s_0\ldots s_n$ is $n+1$ and $last(\sigma)=s_n$ denotes the last state of $\sigma$. With slight abuse of notation, we use $w(\sigma)$ to denote the weight of a finite run $\sigma=s_{0}\ldots s_{n}$, \ie $w(\sigma)=\sum_{i=0}^{n-1}w((s_{i},s_{i+1}))$. Moreover, $w^*(s,s')$ denotes the minimum weight of a finite run from $s$ to $s'$. Specifically, $w^*(s,s)=0$ for every $s\in S$ and if there does not exist a run from $s$ to $s'$, then $w^*(s,s')=\infty$. For a set $S'\subseteq S$ we let $w^*(s,S')=\min \limits_{s'\in S'}w^*(s,s')$. 
We say that a state $s'$ and a set $S'$ is reachable from $s$, iff $w^*(s,s')\neq \infty$ and $w^*(s,S')\neq \infty$, respectively. 

Every run $\rho=s_0s_1\ldots \in \run^{\dts}$, resp. $\sigma=s_0\ldots s_n\in \runfin^{\dts}$, induces a word $z=L(s_0)L(s_1)\ldots\in (2^{\ap})^{\omega}$, resp. $z=L(s_0)\ldots L(s_n)\in (2^{\ap})^+$, over the power set of $\ap$. 

A cycle of the TS $\dts$ is a finite run $\cyc=c_0\ldots c_m$ of $\dts$ for which it holds that $(c_m,c_0)\in T$. 



\begin{definition}
A sub-system of a $\dts=(S,T,\ap,L,w)$ is a TS $\scc = (S_{\scc}, T_{\scc},\ap, L|_{\scc},w|_{\scc})$, where $S_{\scc}\subseteq S$ and $T_{\scc}\subseteq T\cap (S_{\scc}\times S_{\scc})$. We use $L|_{\scc}$ to denote the labeling function $L$ restricted to the set $S_{\scc}$. Similarly, we use $w|_{\scc}$ with the obvious meaning. If the context is clear, we use $L,w$ instead of $L|_{\scc},w|_{\scc}$. 
A sub-system $\scc$ of $\dts$ is called strongly connected if for every pair of states $s,s'\in S_{\scc}$, there exists a finite run $\sigma \in \runfin^{\scc}(s)$ such that $last(\sigma)=s'$. 
A strongly connected component (SCC) of $\dts$ is a maximal strongly connected sub-system of $\dts$.
We use $\SCC(\dts)$ to denote the set of all strongly connected components of $\dts$.  
\end{definition}

Strongly connected components of a TS $\dts$ are pairwise disjoint. Hence, the cardinality of the set $\SCC(\dts)$ is bounded by the number of states of $\dts$ and the set can be computed using Tarjan's algorithm~\cite{tarjan}.



\begin{definition}
Let $\dts=(S,T,\ap,L,w)$ be a TS. A control strategy for $\dts$ is a function $\strat \colon \runfin^{\dts}\to S$ such that for every $\sigma\in \runfin^{\dts}$, it holds that $(last(\sigma),\strat(\sigma))\in T$.
\end{definition}

A strategy $\strat$ for which $\strat(\sigma_1)=\strat(\sigma_2)$, for all finite runs $\sigma_1,\sigma_2\in \runfin^{\dts}$ with $last(\sigma_1)=last(\sigma_2)$, is called memoryless. In that case, $\strat$ is a function $\strat \colon S\to S$.

A strategy is called finite-memory if it is defined as a tuple $\strat = (M,\mathsf{next},\Delta,\mathsf{start})$, where $M$ is a finite set of modes, $\Delta\colon M\times S\to M$ is a transition function, $\mathsf{next}\colon M\times S\to S$ selects a state of $\dts$ to be visited next, and $\mathsf{start}\colon S\to M$ selects the starting mode for every $s\in S$.

A run induced by a strategy $\strat$ for $\dts$ is a run $\rho_{\strat}=s_{0}s_{1}\ldots \in \run^{\dts}$ for which $s_{i+1}=\strat(\rho_{\strat}^{(i)})$ for every $i\geq 0$. For every $s\in S$, there is exactly one run induced by $\strat$ that starts in $s$. A finite run induced by $\strat$ is $\sigma_{\strat}\in \runfin^{\dts}$, which is a finite prefix of a run $\rho_{\strat}$ induced by $\strat$.

Let $\strat$ be a strategy, finite-memory or not, for a TS $\dts$. For every state $s\in S$, the run $\rho_{\strat}\in \run^{\dts}(s)$ induced by $\strat$ satisfies $\runinf(\rho_{\strat})\subseteq S_{\scc}$ for some $\scc \in \SCC(\dts)$~\cite{baierBook}. We say that $\strat$ leads $\dts$ from the state $s$ to the SCC $\scc$.

%



\begin{definition}
Linear Temporal Logic (LTL) formulas over the set $\ap$ are formed according to the following grammar:
\begin{equation*}
\phi::= true\mid a \mid \neg \phi \mid \phi \vee \phi \mid \phi \wedge \phi \mid \X \, \phi \mid \phi \, \U \, \phi \mid \G\, \phi \mid \F\, \phi,
\end{equation*}
where $a\in \ap$ is an atomic proposition, $\neg$, $\vee$ and $\wedge$ are standard Boolean connectives, and $\X$ (\emph{next}), $\U$ (until), $\G$ (always) and $\F$ (eventually) are temporal operators.
\end{definition}

The semantics of LTL is defined over words over~$2^{\ap}$, such as those generated by the runs of a TS $\dts$ (for details see \eg \cite{baierBook}). For example, a word $w \in {(2^{\ap})}^\omega$ satisfies $\G \,\phi$ and $\F \, \phi$ if $\phi$ holds in $w$ always and eventually, respectively. 
If the word induced by a run of $\dts$ satisfies a formula $\phi$, we say that the run satisfies $\phi$. 
We call $\phi$ satisfiable in $\dts$ from $s\in S$ if there exists a run $\rho \in \run^{\dts}(s)$ that satisfies $\phi$. 

Having an initialized TS $\dts$ and an LTL formula $\phi$ over $\ap$, the formal synthesis problem aims to find a strategy $\strat$ for $\dts$ such that the run $\rho_{\strat}\in \run^{\dts}(s_{init})$ induced by $\strat$ 
satisfies $\phi$. In that case we also say that the strategy $\strat$ satisfies $\phi$. The formal synthesis problem can be solved using principles from model checking methods~\cite{baierBook}. 
Specifically, $\phi$ is translated to a B\"{u}chi automaton and the system combining the B\"{u}chi automaton and the TS $\dts$ is analyzed. 

\begin{definition}
A B\"{u}chi automaton (BA) is a tuple $\ba=(Q,2^{\ap},\delta,q_{0},F)$, where $Q$ is a non-empty finite set of states, $2^{\ap}$ is the alphabet, $\delta \subseteq Q\times 2^{\ap} \times Q$ is a transition relation such that for every $q\in Q$, $a\in 2^{\ap}$, there exists $q'\in Q$ such that $(q,a,q')\in \delta$, $q_{0}\in Q$ is the initial state, and $F\subseteq Q$ is a set of accepting states.
\end{definition}

A run $q_{0}q_{1}\ldots Q^{\omega}$ of $\ba$ is an infinite sequence such that for every $i\geq 0$ there exists $a_i\in 2^{\ap}$ with $(q_i,a_i,q_{i+1})\in \delta$. The word $a_{0}a_{1}\ldots \in (2^{\ap})^{\omega}$ is called the word induced by the run $q_{0}q_{1}\ldots $. A run $q_{0}q_{1}\ldots $ of $\ba$ is accepting if there exist infinitely many $i\geq 0$ such that $q_i$ is an accepting state.

For every LTL formula $\phi$ over $\ap$, one can construct a B\"{u}chi automaton $\ba_{\phi}$ such that the accepting runs are all and only words over $2^{\ap}$ satisfying $\phi$ \cite{ltlBaProof}. We refer readers to \cite{fastLtlToBa,efficientBaFromLtl} for algorithms and to online implementations such as \cite{ltl2ba}, to translate an LTL formula to a BA.

\begin{definition}
Let $\dts=(S,T,\ap,L,w)$ be an initialized TS and $\ba=(Q,2^{\ap},\delta,q_{0},F)$ be a B\"{u}chi automaton. The product $\product$ of $\dts$ and $\ba$ is a tuple
$\product = (S_{\product}, T_{\product}, s_{\product init},\ap, L_{\product},F_{\product},w_{\product}),$
where $S_{\product}=S\times Q$, $T_{\product}\subseteq S_{\product}\times S_{\product}$ is a transition relation such that for every $(s,q),(s',q')\in S_{\product}$ it holds that $((s,q),(s',q'))\in T_{\product}$ if and only if $(s,s')\in T$ and $(q,L(s),q')\in \delta$, $s_{\product init}=(s_{init},q_0)$ is the initial state, $L_{\product}((s,q))=L(s)$ is a labeling function, $F_{\product}=S\times F$ is a set of accepting states, and $w_{\product}(((s,q),(s',q')))=w((s,s'))$ is a weight function.
\end{definition}

The product $\product$ can be viewed as an initialized TS 
with a 
set of accepting states. Therefore, we adopt the definitions of a run $\rho$, a finite run $\sigma$, its 
weight $w_{\product}(\sigma)$, and sets 
$\run^{\product}((s,q))$, $\run^{\product}$, $\runfin^{\product}((s,q))$ and $\runfin^{\product}$ from above. Similarly, a cycle $\cyc$ of $\product$, a strategy $\strat_{\product}$ for $\product$ and runs $\rho_{\strat_{\product}},\sigma_{\strat_{\product}}$ induced by 
$\strat_{\product}$ are defined in the same way as for a TS. On the other hand, $\product$ can be viewed as a weighted BA over the trivial alphabet with a labeling function, which gives us the definition of an accepting run of $\product$.

Using the projection on the first component, every run $(s_0,q_0)(s_1,q_1)\ldots $ and finite run $(s_0,q_0)\ldots (s_n,q_n)$ of $\product$ corresponds to a run $s_0s_1\ldots$ and a finite run $s_0\ldots s_n$ of $\dts$, respectively. Vice versa, for every run $s_0s_1\ldots$ and finite run $s_0\ldots s_n$ of $\dts$, there exists a run $(s_0,q_0)(s_1,q_1)\ldots $ and finite run $(s_0,q_0)\ldots (s_n,q_n)$. Similarly, every strategy for $\product$ projects to a strategy for $\dts$ and for every strategy for $\dts$ there exists a strategy for $\product$ that projects to it. The projection of a finite-memory strategy for $\product$ is also finite-memory.

Since $\product$ can be viewed as a TS, we also adopt the definitions of a sub-system and a strongly connected component.

\begin{definition}
Let $\product = (S_{\product}, T_{\product}, s_{\product init},\ap, L_{\product},F_{\product},w_{\product})$ be the product of an initialized TS $\dts$ and a BA $\ba$. An accepting strongly connected component (ASCC) of $\product$ is 
an SCC  $\scc = (S_{\scc}, T_{\scc},\ap, L_{\product},w_{\product})$ such that the set $S_{\scc} \cap F_{\product}$ is non-empty and we refer to it as the set $F_{\scc}$ of accepting states of $\scc$. 
We use $\ASCC(\product)$ to denote the set of all ASCCs of $\product$ that are reachable from the initial state $s_{\product init}$. 
\end{definition}



In our work, we always assume that $\ASCC(\product)$ is non-empty, \ie the given LTL formula is satisfiable in the TS.


\section{Problem Formulation}\label{sec:pf}

Consider an initialized weighted transition system $\dts=(S,T,\ap,L,w)$. The weight $w((s,s'))$ of a transition $(s,s')\in T$ represents the amount of time that the transition takes and the system starts at time $0$. We use $t_{n}$ to denote the point in time after the $n$-th transition of a run, \ie initially the system is in a state $s_{0}$ at time $t_{0}=0$ and after a finite run $\sigma\in \runfin^{\dts}(s_{0})$ of length $n+1$ the time is $t_{n}=w(\sigma)$.

We assume there is a dynamic \emph{penalty} associated with every state $s\in S$. In this paper, we address the following model of penalties. Nevertheless, as we discuss in Sec.\ref{sec:discussion}, the algorithms presented in the next section provide optimal solution for a much broader class of penalty dynamics. The penalty is a rational number between $0$ and $1$ that is increasing every time unit by $\frac{1}{r}$, where $r\in \nat$ is a given rate. Always when the penalty is $1$, in the next time unit the penalty remains $1$ or it drops to $0$ according to a given probability distribution. Upon the visit of a state, the corresponding penalty is incurred. We assume that the visit of the state does not affect the penalty's value or dynamics. Formally, the penalties are defined by a \emph{rate} $r\in \nat$ and a \emph{penalty probability function} $p\colon S\to (0,1]$, where $p(s)$ is the probability that if the penalty in a state $s$ is $1$ then in the next time unit the penalty remains $1$, and $1-p(s)$ is the probability of the penalty dropping to $0$. The penalties are described using a function $\pen\colon S\times \nat_{0}\to \{\tfrac{i}{r}\mid i\in \{0,1,\ldots ,r\}\}$, such that $\pen(s,t)$ is the penalty in a state $s\in S$ at time $t\in \nat_{0}$. For $s\in S$, $\pen(s,0)$ is a uniformly distributed random variable with values in the set $\{\tfrac{i}{r}\mid i\in \{0,1,\ldots ,r\}\}$ and for $t\geq 1$
\begin{equation}\footnotesize \label{eq:pen}
\pen(s,t)=\begin{cases}
\pen(s,t-1)+\frac{1}{r} & \text{if }\pen(s,t-1)<1,\\
x & \text{otherwise},
\end{cases}
\end{equation}
where $x$ is a random variable such that $x=1$ with probability $p(s)$ and $x=0$ otherwise. We use 
\begin{equation} \label{eq:penexp}
\penexp(s)=(1-p(s))\cdot \tfrac{1}{2}+p(s)\cdot 1=\tfrac{1}{2}(1+p(s))
\end{equation}
to denote the expected value of the penalty in a state $s\in S$. Please note that $\tfrac{1}{2}\leq
\penexp(s)\leq 1$, for every $s\in S$.

In our setting, the penalties are sensed only locally in the states in close proximity from the current state. To be specific, we assume a {\it visibility range} $\vis \in \nat$ is given.  If the
system is  in a state $s\in S$ at time $t$,  the penalty $g(s',t)$ of a state $s'\in S$ is
observable if and only if $s'\in \Vis(s)=\{s'\in S\mid w^*(s,s')\leq \vis\}$. The set $\Vis(s)$ is also called the set of states visible from $s$.

The problem we consider in this paper combines the formal synthesis problem with long-term optimization of the expected amount of penalties
incurred during the system's execution. We assume that 
the specification is given as an LTL formula $\phi$ of the form 
\begin{equation}\label{eq:ltlsur}
\phi=\varphi\, \wedge \, \G\F\pisur,
\end{equation}
where $\varphi$ is an LTL formula over $\ap$ and $\pisur\in \ap$. 
This formula requires that the system satisfies $\varphi$ and surveys the states satisfying the property $\pisur$ infinitely often. 
We say that every visit of a state from the set $S_{\sur}=\{s\in S\mid \pisur \in L(s)\}$ completes a surveillance cycle. Specifically, starting from the initial state, the first visit of $S_{\sur}$ after the initial state completes the first surveillance cycle of a run. Note that a surveillance cycle is not a cycle in the sense of the definition of a cycle of a TS in Sec.~\ref{sec:prelim}. For a finite run $\sigma$ such that $last(\sigma)\in S_{\sur}$, $\cycles(\sigma)$ denotes the number of complete surveillance cycles in $\sigma$, otherwise $\cycles(\sigma)$ is the number of complete surveillance cycles plus one. 
We define a function $V_{\dts,\strat}\colon S\to \mathbb{R}^+_0$ such that $V_{\dts,\strat}(s)$ is the expected average cumulative penalty per surveillance cycle (APPC) incurred under a strategy $\strat$ for $\dts$ starting from a state $s\in S$:
\begin{equation}\label{eq:appc}
V_{\dts,\strat}(s)=\limsup_{n\to \infty} E\Big(\frac{\sum_{i=0}^{n}g(\rho_{\strat}(i),w(\rho_{\strat}^{(i)}))}{\cycles(\rho_{\strat}^{(n)})}\Big),
\end{equation}
where $\rho_{\strat}\in \run^{\dts}(s)$ is the run induced by $\strat$ starting from $s$ and $E(\cdot)$ denotes the expected value. In this paper, we consider the following problem:

\medskip

\begin{problem}\label{pf:appc}
Let $\dts=(S,T,AP,L,w)$ be an initialized TS, with penalties defined by a rate $r\in \nat$ and penalty probabilities $p\colon S\to (0,1]$.
Let $\vis \in \nat$ be a visibility range 
and $\phi$ an LTL formula over the set $\ap$ of the form in Eq.~(\ref{eq:ltlsur}). Find a strategy $\strat$ for $\dts$ such that $\strat$ satisfies $\phi$ and among all strategies satisfying $\phi$, $\strat$ minimizes the APPC value $V_{\dts,\strat}(s_{init})$ defined in Eq.~(\ref{eq:appc}).
\end{problem}

\medskip

In the next section, we propose two algorithms solving the above problem. The first algorithm operates offline, without the deployment of the system, and therefore, without taking advantage of the local sensing of penalties. On the other hand, the second algorithm computes the strategy in real-time 
by locally improving the offline strategy according to the penalties observed from the current state and their simulation over the next $h$ time units, where $h\geq 1$ is a natural number, a user-defined \emph{planning horizon}.


\section{Solution}\label{sec:solution}


The two algorithms work with the product $\product = (S_{\product}, T_{\product}, s_{\product init},\ap, L_{\product},F_{\product},w_{\product})$ of the initialized TS $\dts$ and a B\"{u}chi automaton $\ba_{\phi}$ for the LTL formula $\phi$. To project the penalties from $\dts$ to $\product$, we define the penalty in a state $(s,q)\in S_{\product}$ at time $t$ as $\pen((s,q),t)=\pen(s,t)$. 
We also adopt the visibility range $v$ and the set $\Vis((s,q))$ of all states visible from $(s,q)$ is defined as for a state of $\dts$. The APPC function $V_{\product,\strat_{\product}}$ of a strategy $\strat_{\product}$ for $\product$ is then defined according to Eq.~(\ref{eq:appc}). We use the correspondence between the strategies for $\product$ and $\dts$ to find a strategy for $\dts$ that solves Problem~\ref{pf:appc}. Let $\stratoff$ be a strategy for $\product$ such that the run induced by $\stratoff$ visits the set $F_{\product}$ infinitely many times and at the same time, the APPC value $V_{\product,\stratoff}(s_{\product init})$ is minimal among all strategies that visit $F_{\product}$ infinitely many times. It is easy to see that $\stratoff$ projects to a strategy $\strat$ for $\dts$ that solves Problem~\ref{pf:appc} and $V_{\dts,\strat}(s_{init})=V_{\product,\strat_{\product}}(s_{\product init})$.

The offline algorithm leverages ideas from formal methods. Using the automata-based approach to model checking, one can construct a strategy $\stratoff^{\phi}$ for $\product$ that visits at least one of the accepting states infinitely many times. On the other hand, using graph theory, we can design a strategy $\stratoff^V$ that achieves the minimum APPC value among all strategies of $\product$ that do not cause an immediate, unrepairable violation of $\phi$, \ie $\phi$ is satisfiable from every state of the run induced by $\stratoff^V$. However, we would like to have a strategy $\stratoff$ satisfying both properties at the same time.  
To achieve that, we employ a technique from game theory presented in~\cite{chatterjeeMFCS11}. Intuitively, 
we combine two strategies $\stratoff^{\phi}$ and $\stratoff^V$ to create a new strategy $\stratoff$.
The strategy  $\stratoff$ is played in rounds, where each round consists of two phases. In the first phase, we play the strategy $\stratoff^{\phi}$ 
until an accepting state is reached. We say that the system is to achieve the mission subgoal. The second phase applies the strategy $\stratoff^V$. 
The aim is to maintain the expected average cumulative penalty per surveillance cycle in the current round, and we refer to it as the average subgoal. The number of steps for which we apply $\stratoff^V$ is computed individually every time we enter the second phase of a round.  

The online algorithm constructs a strategy $\straton$ by locally improving the strategy $\stratoff$
computed by the offline algorithm.
Intuitively, 
we compare applying $\stratoff$ for several steps to reach a specific state or set of states of
$\product$, to executing different local paths to reach the same state or set. We consider a finite
set of finite runs leading to the state, or set, containing the finite run induced by
$\stratoff$, choose the one that is expected to minimize the average cumulative penalty per surveillance cycle incurred in the current round and apply the first transition of the chosen run. 
The process continues until the state, or set, is reached, and then it starts over again.


\subsection{Probability measure}\label{subsec:arpsprob}

Let $\strat_{\product}$ be a strategy for $\product$ and $(s,q)\in S_{\product}$ a state of $\product$. For a finite run $\sigma_{\strat_{\product}} \in \runfin^{\product}((s,q))$ induced by the strategy $\strat_{\product}$ starting from the state $(s,q)$ and a sequence $\tau \in (\{\tfrac{i}{r}\mid 0\leq i\leq r\})^{+}$ of length $|\sigma_{\strat_{\product}}|$, we
call $(\sigma_{\strat_{\product}},\tau)$
a finite pair.
Analogously, an infinite pair $(\rho_{\strat_{\product}},\kappa)$ 
consists of the run $\rho_{\strat_{\product}}\in \run^{\product}((s,q))$ induced by the strategy $\strat_{\product}$ 
and an infinite sequence $\kappa \in (\{\tfrac{i}{r}\mid 0\leq i\leq r\})^{\omega}$.
A cylinder set $Cyl((\sigma_{\strat_{\product}},\tau))$ of a finite pair $(\sigma_{\strat_{\product}},\tau)$ is the set of all 
infinite pairs $(\rho_{\strat_{\product}},\kappa)$ such that 
$\tau$ is a prefix of $\kappa$. 

Consider the $\sigma$-algebra generated by the set of cylinder sets of all 
finite pairs $(\sigma_{\strat_{\product}},\tau)$, where $\sigma_{\strat_{\product}} \in \runfin^{\product}((s,q))$ is a finite run induced by the strategy $\strat_{\product}$ starting from the state $(s,q)$ and $\tau \in (\{\tfrac{i}{r}\mid 0\leq i\leq r\})^{+}$ is of length $|\sigma_{\strat_{\product}}|$. 
From classical concepts in probability theory~\cite{uniqueprobmeasure}, there exists a unique probability measure $\pr^{\product,\strat_{\product}}_{(s,q)}$ on the $\sigma$-algebra such that for a finite pair $(\sigma_{\strat_{\product}},\tau)$
$$\pr^{\product,\strat_{\product}}_{(s,q)}(Cyl((\sigma_{\strat_{\product}},\tau)))$$
is the probability that the penalties incurred in the first $|\sigma_{\strat_{\product}}|+1$ stages when applying the strategy $\strat_{\product}$ in $\product$ from the state $(s,q)$ are given by the sequence $\tau$, \ie 
$$\pen(\sigma_{\strat_{\product}}(i),w_{\product}(\sigma_{\strat_{\product}}^{(i)}))=\tau(i)$$
for every $0\leq i\leq |\sigma_{\strat_{\product}}|$. This probability is given by the penalty dynamics and therefore, can be computed from the rate $r$ and the penalty probability function $p$. For a set $X$ of infinite pairs, an element of the above $\sigma$-algebra, the probability $\pr^{\product,\strat_{\product}}_{(s,q)}(X)$ is the probability that under $\strat_{\product}$ starting from $(s,q)$ the infinite sequence of penalties received in the visited states is $\kappa$ where $(\rho_{\strat_{\product}}, \kappa)\in X$.


\subsection{Offline control}\label{subsec:appcoff}

In this section, we construct a strategy $\stratoff$ for $\product$ that projects to a strategy $\strat$ for $\dts$ solving Problem~\ref{pf:appc}. 
The strategy $\stratoff$ has to visit $F_{\product}$ infinitely many times and therefore, $\strat_{\product}$ must lead from $s_{\product init}$ to an ASCC. For an 
$\scc \in \ASCC(\product)$, we denote $V_{\scc}^*((s,q))$ the minimum expected average cumulative penalty per surveillance cycle that can be achieved in $\scc$ starting from $(s,q)\in S_{\scc}$. Since $\scc$ is strongly connected, this value is the same for all the states in $S_{\scc}$ and is referred to as $V_{\scc}^*$. It is associated with a cycle $\cyc_{\scc}^V=c_0\ldots c_m$ of $\scc$ witnessing the value, \ie 
$$\tfrac{1}{|\cyc_{\scc}^V\cap S_{\scc \sur}|}\sum \limits_{i=0}^{m}\penexp(c_i)=V_{\scc}^*$$
where $S_{\scc \sur}$ is the set of all states of $\scc$ labeled with $\pisur$.
Since $\scc$ is an ASCC, it holds $S_{\scc \sur}\neq \emptyset$. 

We design an algorithm that finds the value $V_{\scc}^*$ and a cycle $\cyc^V_{\scc}$ for an
ASCC $\scc$.
The algorithm first reduces $\scc$ to a TS $\overline{\scc}$ and then applies the Karp's
algorithm~\cite{karpmaxmean} on  $\overline{\scc}$. The Karp's algorithm finds for a directed graph
with values on edges  a cycle with minimum value per edge also called the minimum mean cycle.  The value $V_{\scc}^*$ and cycle $\cyc^V_{\scc}$ are synthesized from the minimum mean cycle.


Let $\scc=(S_{\scc}, T_{\scc},\ap, L_{\product},w_{\product})$ be an 
ASCC of $\product$. For simplicity, we use singletons such as $u,u_i$ to denote the states of $\product$ in this paragraph. We construct a TS
\begin{equation*}
\overline{\scc}=(S_{\scc \sur}, T_{\overline{\scc}},\ap,L_{\product},w_{\overline{\scc}}),
\end{equation*}
and a function $\seq \colon T_{\overline{\scc}}\to \runfin^{\scc}$ for which it holds that
$(u,u')\in T_{\overline{\scc}}$ if and only if there exists a finite run in $\scc$ from $u\in
S_{\scc \sur}$ to $u'\in S_{\scc \sur}$ with one surveillance cycle, \ie between $u$ and $u'$ no
state labeled with $\pisur$ is visited. Moreover,  the run $\seq((u,u'))=u_0\ldots u_n$ is such that
$u=u_0$ and $\sigma=u_0\ldots u_n u'$ is the finite run in $\scc$ from $u$ to $u'$ with one
surveillance cycle that minimizes the expected sum of penalties received during $\sigma$ among all
finite runs in $\scc$ from $u$ to $u'$ with one surveillance cycle. The TS can be constructed from
$\scc$ by an iterative algorithm eliminating the states from $S_{\scc}\backslash S_{\scc \sur}$ one
by one, in arbitrary order. At the beginning let  $\overline{\scc}=\scc$, $ T_{\overline{\scc}} =
T_{\scc}$,  and for every transition
$(u,u')\in T_{\overline{\scc}}$ let  $\seq((u,u'))=u$. 
The procedure for eliminating $u\in S_{\scc}\backslash S_{\scc \sur}$ proceeds as follows. Consider every $u_1\neq u,u_2\neq u$ such that
$(u_1,u),(u,u_2)\in T_{\overline{\scc}}$. If the transition $(u_1,u_2)$ is not in $T_{\overline{\scc}}$, add $(u_1,u_2)$ to $T_{\overline{\scc}}$ and define 
$\seq((u_1,u_2))=\seq((u_1,u)).\seq((u,u_2))$,
where $.$ is the concatenation of sequences. If $T_{\overline{\scc}}$ already contains the transition $(u_1,u_2)$ and $\seq((u_1,u_2))=\sigma$, we set
$\seq((u_1,u_2))=\seq((u_1,u)).\seq((u,u_2))$,
if
\begin{equation*}\small
\begin{split}
\sum \penexp\big(\seq((u_1,u)).\seq((u,u_2))\big)\leq \sum \penexp\big(\sigma\big),
\end{split}
\end{equation*}
where $\sum \penexp(x)$ for a run $x$ is the sum of $\penexp(x(i))$ for every state $x(i)$ of $x$, otherwise we let $\seq((u_1,u_2))=\sigma$. The weight $w_{\overline{\scc}}((u_1,u_2))=\sum \penexp(\seq((u_1,u_2)))$.
Once all pairs $u_1,u_2$ are handled,  remove $u$ from $S_{\overline{\scc}}$ and all adjacent transitions from $T_{\overline{\scc}}$.  Fig.~\ref{fig:red} demonstrates one iteration of the algorithm. 

Consequently, we apply the Karp's algorithm on the oriented graph with vertices $S_{\scc \sur}$,
edges $T_{\overline{\scc}}$ and values on edges $w_{\overline{\scc}}$. Let
$\cyc_{\overline{\scc}}=u_0\ldots u_m$ be the  minimum mean cycle of this  graph. Then it holds
\begin{equation*}\small
\begin{split}
V_{\scc}^* & =\tfrac{1}{m+1}\sum \limits_{i=0}^{m}\penexp\big(\seq((u_i,u_{i+1 \intmod(m+1)})\big),\\
\cyc_{\scc}^V & = \seq((u_0,u_1)).\, \ldots \,. \seq((u_{m-1},u_m)).\seq((u_m,u_0)).
\end{split}
\end{equation*}
\begin{figure}[t]
\begin{center}
\scalebox{0.45}{\input{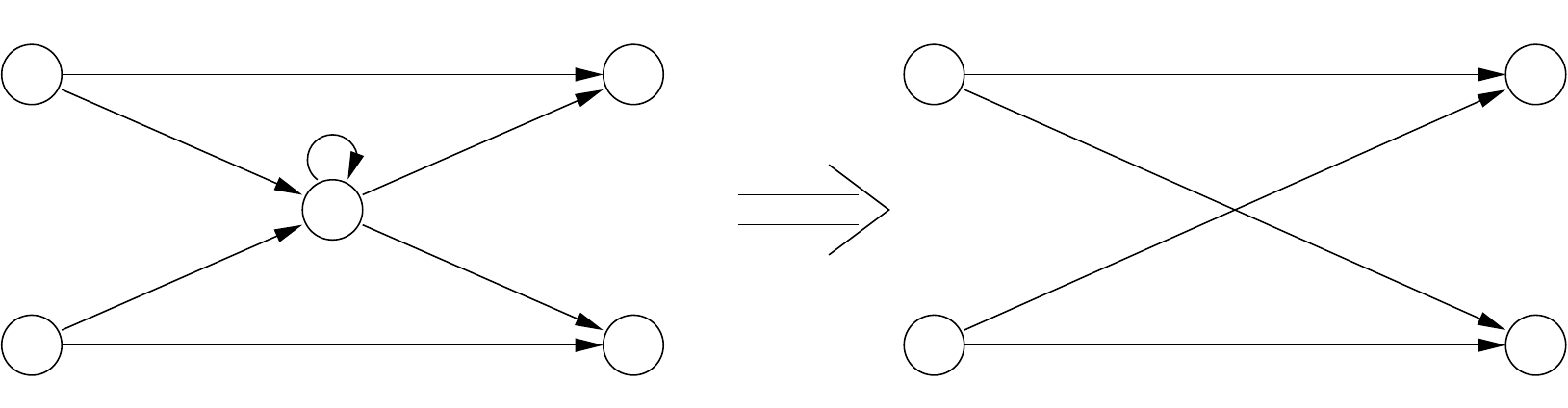_t}}
\end{center}
\caption{An example of elimination of a state during the reduction of an ASCC $\scc$. The finite run $\seq_8$ is equal to the one of the finite runs $\seq_1$ and $\seq_2.\seq_4$ that minimizes the sum of expected penalties in the states of the run. Similarly, $\seq_9$ is one of the finite runs $\seq_7$ and $\seq_5.\seq_6$.}
\label{fig:red}
\end{figure}
When the APPC value and the corresponding cycle is computed for every 
ASCC of $\product$, we choose the ASCC that minimizes the APPC value. We denote this ASCC $\scc=(S_{\scc}, T_{\scc},\ap, L_{\product},w_{\product})$ and $\cyc_{\scc}^V=c_0\ldots c_m$.

The mission subgoal aims to reach an accepting state from the set $F_{\scc}$. The corresponding strategy $\stratoff^{\phi}$ is such that from every state $(s,q)\in S_{\product}\backslash F_{\scc}$ that can reach the set $F_{\scc}$, we follow one of the finite runs with minimum weight from $(s,q)$ to $F_{\scc}$. That means, $\stratoff^{\phi}$ is a memoryless strategy such that for $(s,q)\in S_{\product}\backslash F_{\scc}$ with $w_{\product}^*((s,q),F_{\scc})<\infty$ it holds $\stratoff^{\phi}((s,q))=(s',q')$ where 
$$w_{\product}((s,q),(s',q'))=w_{\product}^*((s,q),F_{\scc})-w_{\product}^*((s',q'),F_{\scc}).$$

The strategy $\stratoff^V$ for the average subgoal is given by the cycle $\cyc_{\scc}^V=c_0\ldots c_m$ of the ASCC $\scc$. Similarly to the mission subgoal, for a state $(s,q)\in S_{\product}\backslash \cyc_{\scc}^V$ with $w^*_{\product}((s,q),\cyc_\scc^V)<\infty$, the strategy $\stratoff^V$ follows one of the finite runs with minimum weight to $\cyc_{\scc}^V$. For a state $c_i\in \cyc_{\scc}^V$, it holds $\stratoff^V(c_i)=c_{i+1 \intmod(m+1)}$. If all the states of the cycle $\cyc_{\scc}^V$ are distinct, the strategy $\stratoff^V$ is memoryless, otherwise it is finite-memory.
%


\begin{proposition}\label{th:limprob}
For the strategy $\stratoff^V$ and every state $(s,q)\in S_{\scc}$, it holds
\begin{equation*}
\lim \limits_{n \to \infty} \pr^{\scc,\stratoff^V}_{(s,q)}\big(\frac{\sum_{i=0}^{n}g(\rho_{\stratoff^V}(i),w_{\product}(\rho_{\stratoff^V}^{(i)}))}{\cycles(\rho_{\stratoff^V}^{(n)})}\leq V_{\scc}^*\big)=1.
\end{equation*}
Equivalently, for every state $(s,q)\in S_{\scc}$ and every $\epsilon>0$, there exists $j(\epsilon)\in \nat$ such that if the strategy $\stratoff^V$ is played from the state $(s,q)$ until at least $l\geq j(\epsilon)$ surveillance cycles are completed, then the average cumulative penalty per surveillance cycle incurred in the performed finite run is at most $V_{\scc}^*+\epsilon$ with probability at least $1-\epsilon$.
\end{proposition}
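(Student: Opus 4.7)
The plan is to decompose the run $\rho$ induced by $\stratoff^V$ starting from $(s,q) \in S_\scc$ into a bounded transient prefix that reaches some state of the cycle $\cyc_\scc^V = c_0 \ldots c_m$, followed by repeated traversals of $\cyc_\scc^V$. Because $\stratoff^V$ is finite-memory and the transient prefix has deterministic length bounded by $|S_\scc|$, both the penalty it contributes and the number of surveillance cycles it contains are $O(1)$ and therefore negligible in the limit $n \to \infty$. Asymptotically, the APPC along $\rho$ is determined purely by the repeated traversals of $\cyc_\scc^V$.

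The key probabilistic ingredient is that, for each $c_i \in \cyc_\scc^V$, the process $\pen(c_i, \cdot)$ is a Markov chain on $\{0, 1/r, \ldots, 1\}$ that is irreducible and, since $p(c_i) > 0$ creates a self-loop at state $1$, aperiodic; hence it is ergodic, with a stationary mean that by Eq.~(\ref{eq:penexp}) is at most $\penexp(c_i)$. Along the cyclic suffix, the state $c_i$ is visited at times $t_i + kW$ for $k \geq 0$, where $W$ is the total weight of one traversal of $\cyc_\scc^V$; subsampling at constant lag preserves ergodicity, and the penalty chains at distinct cycle states are independent. By the ergodic theorem, the time averages $\tfrac{1}{k}\sum_{j=0}^{k-1}\pen(c_i, t_i + jW)$ converge almost surely to the respective stationary means. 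Summing over $i = 0, \ldots, m$ and using the construction of the minimum mean cycle which gives $\sum_{i=0}^m \penexp(c_i) = |\cyc_\scc^V \cap S_{\scc \sur}| \cdot V_\scc^*$, the expected cumulative penalty over $k$ full traversals of $\cyc_\scc^V$ is at most $k \cdot |\cyc_\scc^V \cap S_{\scc \sur}| \cdot V_\scc^*$. Dividing by the corresponding number $k \cdot |\cyc_\scc^V \cap S_{\scc \sur}|$ of completed surveillance cycles, the realized average converges almost surely to a value at most $V_\scc^*$.

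The $\epsilon$-reformulation stated in the proposition then follows by a concentration argument: since individual penalties lie in $[0,1]$ and the contributions from different cycle states are independent, the sum of penalties over $k$ traversals has variance that is linear in $k$, so Chebyshev's inequality (or Hoeffding's bound) furnishes an explicit threshold $j(\epsilon)$ after which the realized average lies within $\epsilon$ of its mean with probability at least $1 - \epsilon$. The main obstacle is justifying the ergodic limit for the subsampled chain and properly absorbing the $O(1)$ boundary contributions (the transient prefix and any partial final traversal within a round); both are routine applications of the ergodic theorem for finite-state irreducible aperiodic Markov chains applied to the subsampled process $\pen(c_i, t_i + kW)$, together with the observation that an additive $O(1)$ correction to the numerator and denominator of the APPC ratio vanishes as the number $l$ of completed surveillance cycles grows.
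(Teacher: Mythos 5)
Your argument is correct in substance but takes a genuinely different route from the paper's. The paper's proof is a sketch by reduction: it translates the product $\product$ with dynamic penalties into an MDP with static penalties, notes that the run induced by $\stratoff^V$ is a Markov chain of that MDP whose cycle corresponds to the minimum mean cycle of the reduced system $\overline{\scc}$, and then invokes a result of~\cite{chatterjeeMFCS11} on the minimum expected penalty per stage. You instead give a self-contained probabilistic proof: discard the $O(1)$ transient prefix, view the penalty at each cycle position as a finite Markov chain subsampled at the constant traversal period $W$, apply the ergodic theorem to get almost-sure convergence of the per-traversal averages to the stationary means (which are bounded by the $\penexp(c_i)$), and obtain the quantitative threshold $j(\epsilon)$ from a variance bound. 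What your route buys is an explicit, citation-free argument and, more usefully, a concrete handle on $j(\epsilon)$, which the paper needs algorithmically but leaves implicit behind the black-box reduction.

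Three points need repair, all fixable. First, independence of ``the penalty chains at distinct cycle states'' can fail: the cycle $\cyc_{\scc}^V$ may visit the same product state more than once (the paper explicitly allows $\stratoff^V$ to be finite-memory for exactly this reason), and distinct product states $(s,q)\neq (s,q')$ share one penalty process since $\pen((s,q),t)=\pen(s,t)$. Independence is irrelevant for the almost-sure limit (a sum of a.s.-convergent sequences converges a.s.), but your Hoeffding step uses it; replace it by viewing the cumulative penalty of one traversal as an additive functional of the single joint chain of all penalty processes over a window of length $W$, which is again a finite chain with $O(k)$ variance over $k$ traversals, so Chebyshev suffices. Second, irreducibility and aperiodicity fail when $p(c_i)=1$ (the value $1$ is absorbing); in that case the penalty reaches $1$ within $r$ steps and every subsequently incurred value equals $1=\penexp(c_i)$, so the bound at that position holds deterministically rather than ergodically. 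Third, the claim that the stationary mean is at most $\penexp(c_i)$ does not follow ``by Eq.~(\ref{eq:penexp})'' --- that equation is a definition, not a computation of the stationary law; you need the short calculation $\pi(i/r)=\pi(0)$ for $i<r$ and $\pi(1)=\pi(0)/(1-p)$, which gives a stationary mean of $\frac{(1-p)(r-1)/2+1}{r(1-p)+1}\leq \frac{1+p}{2}$. With these patches the proof is complete.
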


\begin{proof}
(Sketch.) The proof is based on the fact that the product $\product$ with dynamic penalties can be translated into a Markov decision process (MDP)~(see \eg \cite{mdps}) with static penalties. The run $\rho_{\stratoff^V}$ corresponds to a Markov chain (see \eg \cite{markovchains}) of the MDP. Moreover, the cycle $\cyc_{\scc}^*$ corresponds to the minimum mean cycle of the reduced TS $\overline{\scc}$. Hence, the equation in the theorem is equivalent to the property of MDPs with static penalties proved in~\cite{chatterjeeMFCS11} regarding the minimum expected penalty incurred per stage.  
\end{proof}

\begin{remark}

Assume there exists a state $(s,q)\in S_{\product}$ with $p((s,q))=0$, \ie if the penalty in $(s,q)$ is $1$, it always drops to $0$. The dynamics of the penalty in $(s,q)$ is not probabilistic and if we visit $(s,q)$ infinitely many times, the expected average penalty incurred in $(s,q)$ might differ from $\penexp((s,q))$. That can cause violation of Prop.~\ref{th:limprob}.

\end{remark}

Now we describe the strategy $\stratoff$. 
It is played in rounds, where each round consists of two phases, one for each subgoal. The first round starts at the beginning of the execution of the system in the initial state $s_{\product init}$ of $\product$. Let $i$ be the current round. In the first phase of the round the strategy $\stratoff^{\phi}$ is applied until an accepting state of the ASCC $\scc$ is reached. We use $k_i$ to denote the number of steps we played the strategy $\stratoff^{\phi}$ in round $i$. Once the mission subgoal is fulfilled, the average subgoal becomes the current subgoal. In this phase, we play the strategy $\stratoff^V$ until the number of completed surveillance cycles in the second phase of the current round is $l_i\geq \max \{j(\tfrac{1}{i}),i\cdot k_i\}$.

%

\begin{theorem}\label{th:offcorr}
The strategy $\stratoff$ projects to a strategy $\strat$ of $\dts$ that solves Problem~\ref{pf:appc}.
\end{theorem}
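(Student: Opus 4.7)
The plan is to verify the two requirements of Problem~\ref{pf:appc} separately for the projected strategy $\strat$: namely, that $\strat$ satisfies $\phi$, and that it minimizes $V_{\dts,\strat}(s_{init})$. Throughout, I work with $\stratoff$ on $\product$ and use the standard correspondence between strategies on $\product$ and on $\dts$, together with the fact that $V_{\dts,\strat}(s_{init})=V_{\product,\stratoff}(s_{\product init})$.

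For satisfaction of $\phi$, I would observe that the construction of $\stratoff$ guarantees infinitely many rounds are played, because the second phase of each round terminates after finitely many surveillance cycles (an accepting state of $\scc$ is reachable from every state reached during phase two, since $\scc$ is strongly connected and the algorithm eventually routes back to $\cyc_{\scc}^V$, from which $F_{\scc}$ is reachable in finitely many steps; then $\stratoff^{\phi}$ reaches $F_{\scc}$ in finitely many steps from any state from which it is reachable). Therefore the induced run visits $F_{\product}$ infinitely often, and the projected run of $\dts$ satisfies $\phi$ by the standard automata-based argument.

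For optimality, I first argue the upper bound $V_{\product,\stratoff}(s_{\product init})\leq V_{\scc}^*$. Decompose the induced run into completed rounds; round $i$ uses $k_i$ phase-one steps and enough phase-two steps to complete $l_i\geq \max\{j(1/i), i\cdot k_i\}$ surveillance cycles. Since all penalties lie in $[0,1]$, phase one contributes at most $k_i$ to the cumulative penalty of round $i$ and at most a bounded number of surveillance cycles (at most $k_i$). By Proposition~\ref{th:limprob} applied to the phase-two segment, with probability at least $1-1/i$ the average penalty per surveillance cycle accumulated during phase two of round $i$ is at most $V_{\scc}^*+1/i$. Combining these, the contribution of round $i$ to the numerator in~(\ref{eq:appc}) exceeds $V_{\scc}^*$ times the number of surveillance cycles in round $i$ by at most $k_i + (1/i)\cdot l_i + \mathrm{(low\ probability\ correction)}$; dividing by at least $l_i\geq i\cdot k_i$ surveillance cycles gives $O(1/i)$, and a Borel--Cantelli-type argument on the probabilities $1/i$ controls the tail. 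Taking $\limsup$ and using that the total surveillance-cycle count grows without bound across rounds shows the expected ratio tends to $V_{\scc}^*$.

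For the matching lower bound, I would argue that any strategy $\strat_{\product}'$ for $\product$ whose induced run visits $F_{\product}$ infinitely often has $\runinf(\rho_{\strat_{\product}'})\subseteq S_{\scc'}$ for some ASCC $\scc'\in \ASCC(\product)$; by the MDP reduction referenced in the proof sketch of Proposition~\ref{th:limprob} (and the fact that Karp's algorithm over $\overline{\scc'}$ yields the minimum achievable expected APPC on $\scc'$), we have $V_{\product,\strat_{\product}'}(s_{\product init})\geq V_{\scc'}^*\geq \min_{\scc''\in \ASCC(\product)}V_{\scc''}^*=V_{\scc}^*$. Combined with the upper bound, this gives optimality.

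I expect the main obstacle to be the rigorous handling of the $\limsup$ together with the probabilistic guarantee of Proposition~\ref{th:limprob}: one must simultaneously control (i) the vanishing contribution of the finite phase-one segments (ensured by the choice $l_i\geq i\cdot k_i$), (ii) the $\epsilon$-approximation of $V_{\scc}^*$ during phase two (ensured by $l_i\geq j(1/i)$), and (iii) the probability $1/i$ that phase two of round $i$ fails to meet the bound. Summability of $1/i$ is not automatic, so the argument should instead propagate the probability-$1/i$ failure into an additive $O(1/i)$ term in the expected average, which vanishes in the limit; this sidesteps any convergence-of-series concern and is the cleanest way to close the proof.
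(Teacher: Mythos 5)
Your proposal takes essentially the same route as the paper's own proof: reach the chosen ASCC $\scc$ in finite time, observe that every round visits an accepting state of $F_{\product}$ (hence the induced run satisfies $\phi$), and combine Proposition~\ref{th:limprob} with the choice $l_i\geq\max\{j(\tfrac{1}{i}),i\cdot k_i\}$ to bound the average penalty per surveillance cycle in round $i$ by $V_{\scc}^*+O(\tfrac{1}{i})$ with probability at least $1-\tfrac{1}{i}$ (the paper's $\tfrac{2}{i}$ is exactly your ``phase-one contributes at most $k_i/l_i\leq \tfrac{1}{i}$ plus phase-two within $\tfrac{1}{i}$ of optimal'' accounting). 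You go beyond the paper in two useful ways. First, the paper's proof only shows that $\stratoff$ \emph{achieves} $V_{\scc}^*$; the matching lower bound --- that no $\phi$-satisfying strategy can do better --- is left implicit in the construction of Sec.~\ref{subsec:appcoff}, whereas you state it explicitly via the observation that any such strategy is eventually trapped in some ASCC $\scc'$ with achievable value at least $V_{\scc'}^*\geq V_{\scc}^*$. Second, you correctly flag that $\sum_i\tfrac{1}{i}$ diverges, so the paper's closing ``with probability $1$'' cannot be obtained from Borel--Cantelli; since the APPC value in Eq.~(\ref{eq:appc}) is a $\limsup$ of \emph{expectations}, your resolution --- folding the probability-$\tfrac{1}{i}$ failure event into an additive $O(\tfrac{1}{i})$ term in the expected round average, using that the cumulative penalty per surveillance cycle along $\cyc_{\scc}^V$ is bounded --- is the right way to close the argument and is tighter than what the paper writes down.
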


\begin{proof}
From the fact that the ASCC $\scc$ is reachable from the initial state $s_{\product init}$ and from the construction of $\stratoff^{\phi}$, it follows that $\scc$ is reached from $s_{\product init}$ in finite time. In every round $i$ of the strategy $\stratoff$, an accepting state is visited. Moreover, from Prop.~\ref{th:limprob} and the fact that $l_i\geq \max \{j(\tfrac{1}{i}),i\cdot k_i\}$, it can be shown that the average cumulative penalty per surveillance cycle incurred in the $i$-th round is at most $V_{\scc}^*+\tfrac{2}{i}$ with probability at least $1-\tfrac{1}{i}$. Therefore, in the limit, the run induced by $\stratoff$ satisfies the LTL specification and reaches the optimal average cumulative penalty per surveillance cycle $V_{\scc}^*$ with probability $1$.
\end{proof}

%
%
Note that, in general, the strategy $\stratoff$ is not finite-memory. The reason is that in the modes of the finite-memory strategy we would need to store the number of steps spent so far in the first phase $k_i$ and the number $l_i$ of the surveillance cycles in the second phase of a given round. Since $j(\tfrac{1}{i})$ is generally increasing with $i$, we would need infinitely many modes to be able to count the number of surveillance cycles in the second phase. However, if there exists a cycle $\cyc_{\scc}^V$ of the SCC $\scc$ corresponding to $V_{\scc}^*$ that contains an accepting state, then the finite-memory strategy $\stratoff^V$ for the average subgoal maps to a strategy of $\dts$ solving Problem~\ref{pf:appc}, which is therefore in the worst case finite-memory as well.


\subsubsection*{Complexity}

The size of a BA for an LTL formula $\phi$ is in the worst case $2^{\mathcal{O}(|\phi|)}$, where $|\phi|$ is the size of $\phi$~\cite{fastLtlToBa}. However, the actual size of the BA is in practice often quite small. The size of the product $\product$ is $\mathcal{O}(|S|\cdot 2^{\mathcal{O}(|\phi|)})$. To compute the minimum weights $w^*((s,q),(s',q'))$ between every two states of $\product$ we use Floyd-Warshall algorithm taking $\mathcal{O}(|S_{\product}|^3)$ time. Tarjan's algorithm~\cite{tarjan} is used to compute the set $\SCC(\product)$ in time $\mathcal{O}(|S_{\product}|+|T_{\product}|)$. The reduction of an 
ASCC $\scc$ can be computed in time $\mathcal{O}(|S_{\scc}|\cdot |T_{\scc}|^2)$. The Karp's algorithm~\cite{karpmaxmean} finds the optimal APPC value and corresponding cycle in time $\mathcal{O}(|S_{\overline{\scc}}|\cdot |T_{\overline{\scc}}|)$. The main pitfall of the algorithm is to compute the number $j(\tfrac{1}{i})$ of surveillance cycles needed in the second phase of the current round $i$ according to Prop.~\ref{th:limprob}. Intuitively, we need to consider the finite run $\sigma_{\stratoff^V,k}$ induced by the strategy $\stratoff^V$ from the current state that contains $k=1$ surveillance cycles, and compute the sum of probabilities $\pr^{\product,\strat_{\product}}_{(s,q)}(Cyl((\sigma_{\stratoff^V,k},\tau)))$ for every $\tau$ with the average cumulative penalty per surveillance cycle less or equal to $V_{\scc}^*+\tfrac{1}{i}$. If the total probability is at least $1-\tfrac{1}{i}$, we set $j(\tfrac{1}{i})=k$, otherwise we increase $k$ and repeat the process. For every $k$, there exist $r^{|\sigma_{\stratoff^V,k}|}$ sequences $\tau$. To partially overcome this issue, we compute the number $j(\tfrac{1}{i})$ only at the point in time, when the number of surveillance cycles in the second phase of the current round $i$ is $i\cdot k_i$ and the average cumulative penalty in this round is still above $V_{\scc}^*+\tfrac{2}{i}$. As the simulation results in Sec.~\ref{sec:casestudy} show, this happens only rarely, if ever.

\subsection{Online control}\label{subsec:appcon}

The online algorithm locally improves the strategy $\stratoff$ 
according to the values of penalties observed from the current state and their simulation in the next $h$ time units. The resulting strategy $\straton$ is again played in rounds. However, in each step of the strategy $\straton$, we consider a finite set of finite runs starting from the current state, choose one according to an optimization function, and apply its first transition.

Throughout the rest of the section we use the following notation. We use singletons such as $u,u_i$ to denote the states of $\product$. Let $\sigma_{\mathrm{all}}\in \runfin^{\product}(s_{\product init})$ denote the finite run executed by $\product$ so far. Let $i$ be the current round of the strategy $\straton$ 
and $\sigma_i=u_{i,0}\ldots u_{i,k}$ the finite run executed so far in this round, \ie $u_{i,k}$ is the current state of $\product$. We use $t_{i,0},\ldots ,t_{i,k}$ to denote the points in time when the states $u_{i,0},\ldots ,u_{i,k}$ were visited, respectively. 

The optimization function $f\colon \runfin^{\product}(u_{i,k}) \to [0,1]$ assigns every finite run $\sigma=u_0\ldots u_n$ starting from the current state a value $f(\sigma)$ that is the expected average cumulative penalty per surveillance cycle that would be incurred in the round $i$, if the run $\sigma$ was to be executed next, \ie
\begin{equation}
\small
f(\sigma)=\frac{\sum \limits_{j=0}^{k}\pen(u_{i,j},t_{i,j}) + \sum \limits_{j=1}^{n}\pensim(u_j,t_{i,k}+w_{\product}(\sigma^{(j)}))}{\cycles(\sigma_i.\sigma(1)\ldots last(\sigma))},
\label{eq:optf}
\end{equation}
where $\pensim(u_j,t_{i,k}+w_{\product}(\sigma^{(j)}))$ is the simulated expected penalty incurred in the state $u_{j}$ at the time of its visit. If the visit occurs within the next $h$ time units and the state $u_j$ is visible from the current state $u_{i,k}$, we simulate the penalty currently observed in $u_j$ over $w_{\product}(\sigma^{(j)})$ time units. Otherwise, we set the expected penalty to be $\penexp(u_j)$. The exact definition of $w_{\product}(\sigma^{(j)})$ can be found in Tab.~\ref{tab:pensim}.

\begin{table*}[!ht]
\centering
\renewcommand{\arraystretch}{1.4}
{
\centering
\begin{tabular}{|c|}
\hline
$
\textcolor{blue}{\pensim(u_j,t_{i,k}+w_{\product}(\sigma^{(j)}))}=\begin{cases}
\pen(u_j,t_{i,k})+\frac{w_{\product}(\sigma^{(j)})}{r} & \text{if }u_j\in \Vis(u_{i,k}), w_{\product}(\sigma^{(j)})\leq h \text{ and }\pen(u_j,t_{i,k})+\frac{w_{\product}(\sigma^{(j)})}{r}\leq 1,\\
\sum \limits_{x=0}^{r} \textcolor{blue}{\mathsf{pst}(\frac{x}{r})} + \textcolor{blue}{\mathsf{pst}(1)}  & \text{if }u_j\in \Vis(u_{i,k}), w_{\product}(\sigma^{(j)})\leq h \text{ and }\pen(u_j,t_{i,k})+\frac{w_{\product}(\sigma^{(j)})}{r}>1,\\
\penexp(u_j) & \text{otherwise.}
\end{cases}
$\\
\hline
$
\textcolor{blue}{\mathsf{pst}(\frac{x}{r})} = \Big( \sum \limits_{y=0}^{z_1} \frac{(z_1-y+z_2+y(r+1))!}{(z_1-y)!\cdot (z_2+y(r+1))!} \cdot (1-p(u_j))^{(z_1-y)}\cdot (p(s))^{(z_2+y(r+1))}\Big)\cdot (1-p(s))\cdot \frac{x}{r}
$\\
$
\text{if }z=w_{\product}(\sigma^{(j)})-(1-\pen(u_j,t_{i,k}))\cdot r -x -1\geq 0, z_1=z \intdiv (r+1), z_2=z \intmod (r+1);
\text{ otherwise if }z<0, \textcolor{blue}{\mathsf{pst}(\frac{x}{r})}=0
$\\
$\textcolor{blue}{\mathsf{pst}(1)} = \sum \limits_{y=0}^{z_3} \frac{(z_3-y+z_4+y(r+1))!}{(z_3-y)!\cdot (z_4+y(r+1))!} \cdot (1-p(u_j))^{(z_3-y)}\cdot (p(s))^{(z_4+y(r+1))}\cdot p(s)
$\\
$
\text{where }z=w_{\product}(\sigma^{(j)})-(1-\pen(u_j,t_{i,k}))\cdot r -1, z_3= z \intdiv (r+1),z_4= z \intmod (r+1)
$\\
\hline
\end{tabular}
}
\caption{The function computing the simulated expected penalty incurred in a state $u_j$ of the run $\sigma$ at the time of its visit $t_{i,k}+w_{\product}(\sigma^{(j)})$ if we are to apply the run $\sigma$ from the current state $u_{i,k}$, $div$ stands for integer division and $mod$ for modulus.}
\label{tab:pensim}
\end{table*}

For a set of states $X\subseteq S_{\product}$, we define a \emph{shortening indicator function} $I_{X}\colon T_{\product}\to \{0,1\}$ such that for $((s_1,q_1),(s_2,q_2))\in T_{\product}$ 
\begin{equation}\label{eq:indicator}
\small
I_{X}\big(((s_1,q_1),(s_2,q_2))\big)=\begin{cases}
1 & \text{if }w_{\product}^*((s_1,q_1),X) \\
& \qquad >w_{\product}^*((s_2,q_2),X),\\
0 & \text{otherwise.}
\end{cases}
\end{equation}
Intuitively, the indicator has value $1$ if the transition leads strictly closer to the set $X$, and $0$ otherwise.

In the first phase of every round, we locally improve the strategy $\stratoff^{\phi}$ computed in Sec.~\ref{subsec:appcoff} that aims to visit an accepting state of the chosen ASCC $\scc$. In each step of the resulting strategy $\straton^{\phi}$, we consider the set $\run_{\phi}(u_{i,k})$ of all finite runs from the current state $u_{i,k}$ that lead to an accepting state from the set $F_{\scc}$ with all transitions shortening in the indicator $I_{F_{\scc}}$ defined according to Eq.~(\ref{eq:indicator}), \ie
\begin{equation*}\footnotesize
\begin{split}
\run_{\phi}(u_{i,k})= \{ & \sigma\in \runfin^{\product}(u_{i,k}) \mid last(\sigma)\in F_{\scc},\\
& \forall 0\leq j\leq |\sigma|-1\colon I_{F_{\scc}}((\sigma(j),\sigma(j+1)))=1\}.
\end{split}
\end{equation*}
Let $\sigma\in \run_{\phi}(u_{i,k})$ be the run that minimizes the optimization function $f$ from Eq.~(\ref{eq:optf}). Then $\straton^{\phi}(\sigma_{\mathrm{all}})=\sigma(1)$. 
Just like in the offline algorithm, the strategy $\straton^{\phi}$ is applied until a state from the set $F_{\scc}$ is visited.

In the second phase, we locally improve the strategy $\stratoff^V$ for the average subgoal computed in Sec.~\ref{subsec:appcoff} to obtain a strategy $\straton^V$. However, the definition of the set of finite runs we choose from changes during the phase. At the beginning of the second phase of the current round $i$, we aim to reach the cycle $\cyc_{\scc}^V=c_0\ldots c_m$ of the ASCC $\scc$ and we use the same idea that is used in the first phase above. To be specific, we define $\straton^V(\sigma_{\mathrm{all}})=\sigma(1)$, where $\sigma$ is the finite run minimizing $f$ from the set
 \begin{equation*}\footnotesize
\begin{split}
\run_{V}(u_{i,k})= \{ & \sigma\in \runfin^{\product}(u_{i,k}) \mid last(\sigma)\in \cyc_{\scc}^V,\\
& \forall 0\leq j\leq |\sigma|-1\colon I_{\cyc_{\scc}^V}((\sigma(j),\sigma(j+1)))=1\}.
\end{split}
\end{equation*}
Once a state $c_a\in \cyc_{\scc}^V$ of the cycle is reached, we continue as follows. Let $c_b\in \cyc_{\scc}^V$ be the first state labeled with $\pisur$ that is visited from $c_a$ if we follow the cycle. 
Until we reach the state $c_b$, the optimal finite run $\sigma$ is chosen from the set 
\begin{equation*}\footnotesize
\begin{split}
\run_{V}(u_{i,k})= \{ & \sigma\in \runfin^{\product}(u_{i,k}) \mid last(\sigma)=c_b,\text{ and }\\
& \forall 0\leq j\leq |\sigma|-1\colon I_{c_b}((\sigma(j),\sigma(j+1)))=1\text{ or }\\
& |\sigma_{c_a\to u_{i,k}}|+|\sigma|\leq  b-a+2 \intmod(m+1)\},
\end{split}
\end{equation*}
where 
$\sigma_{c_a\to u_{i,k}}$ is the finite run already executed in $\product$ from the state $c_a$ to the current state $u_{i,k}$. Intuitively, the set contains every finite run from the current state to the state $c_b$ that either has all transitions shortening in $I_{c_b}$ or the length of the finite run is such that if we were to perform the finite run, the length of the performed run from $c_a$ to $c_b$ would not be longer than following the cycle from $c_a$ to $c_b$. When the state $c_b$ is reached, we restart the above procedure with $c_a=c_b$. The strategy $\straton^V$ is performed until $l_i\geq \max \{j(\tfrac{1}{i}),i\cdot k_i\}$ surveillance cycles are completed in the second phase of the current round $i$, where $k_i$ is the number of steps of the first phase and $j$ is from Prop.~\ref{th:limprob}. We can end the second phase sooner, specifically in any time when we complete a surveillance cycle and the average cumulative penalty per surveillance cycle incurred in the current round is less or equal to $V_{\scc}^*+\tfrac{2}{i}$.


\begin{theorem}\label{th:oncorr}
The strategy $\straton$ projects to a strategy $\strat$ of $\dts$ solving Problem~\ref{pf:appc}.
\end{theorem}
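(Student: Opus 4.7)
The plan is to mirror the structure of the proof of Theorem~\ref{th:offcorr}, establishing two things: (i) the run induced by $\straton$ visits $F_{\scc}$ infinitely often, so its projection to $\dts$ satisfies $\phi$; and (ii) its expected average cumulative penalty per surveillance cycle converges almost surely to the optimum $V_{\scc}^*$.

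For (i), I first argue that every round of $\straton$ terminates in finite time. Each transition prescribed during the first phase is the first step of some run in $\run_{\phi}(u_{i,k})$, all of whose edges strictly shorten the weight-distance to $F_{\scc}$ via the indicator $I_{F_{\scc}}$; hence the first phase reaches $F_{\scc}$ in at most $w_{\product}^*(u_{i,0},F_{\scc})$ steps. The same argument applies to the sub-phase of $\straton^V$ that drives the run into the cycle $\cyc_{\scc}^V$. Once the cycle is reached, the candidate set $\run_V(u_{i,k})$ at each step admits either a strictly shortening branch towards the next surveillance state $c_b$ or a length-bounded fallback; together these guarantee that $c_b$ is reached within at most $|\cyc_{\scc}^V|+2$ steps, so an unbounded number of surveillance cycles can be completed and the stopping condition $l_i \geq \max\{j(1/i),\, i\cdot k_i\}$ is met in finitely many steps. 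Since this holds in every round, $F_{\scc}$ is visited infinitely often, so by the construction of $\product$ the projection of the induced run to $\dts$ satisfies $\phi$.

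For (ii), the key observation is that at every decision epoch in the second phase the offline choice of following $\cyc_{\scc}^V$ belongs to the candidate set $\run_V(u_{i,k})$ -- directly during the reach-the-cycle sub-phase by shortening, and on the cycle itself via the length-bounded clause taken with equality -- so the run $\sigma^{\star}$ minimizing the optimization function $f$ from Eq.~(\ref{eq:optf}) satisfies $f(\sigma^{\star}) \leq f(\sigma_{\mathrm{cyc}})$. By the construction of $\pensim$ in Tab.~\ref{tab:pensim} and the definition of $\penexp$ in Eq.~(\ref{eq:penexp}), the value $f(\sigma)$ equals the conditional expectation of the cumulative penalty per surveillance cycle along $\sigma$, given the currently observed penalty values in $\Vis(u_{i,k})$ and the marginal distribution of penalties outside. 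Taking expectations and applying the tower property, the expected cost per surveillance cycle under $\straton^V$ is no larger than under $\stratoff^V$.

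Combining these, the MDP-reduction argument underlying Proposition~\ref{th:limprob} does not hinge on the specific strategy as long as its per-cycle expected cost is bounded above by $V_{\scc}^*$; applied to $\straton^V$, it yields, for every $\epsilon>0$, a threshold $j(\epsilon)$ of surveillance cycles after which the average cumulative penalty per cycle lies within $V_{\scc}^*+\epsilon$ with probability at least $1-\epsilon$. Exactly as in the proof of Theorem~\ref{th:offcorr}, the round-$i$ guarantee of average penalty at most $V_{\scc}^*+2/i$ with probability at least $1-1/i$ follows, and Borel--Cantelli gives almost-sure convergence to $V_{\scc}^*$ in the limit $i\to\infty$. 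Since $V_{\scc}^*$ is by construction the minimum APPC over all ASCCs of $\product$, the projection of $\straton$ to $\dts$ solves Problem~\ref{pf:appc}. The main obstacle I expect is the rigorous verification that $f(\sigma)$ indeed equals the correct conditional expected cumulative penalty per surveillance cycle: one must check that the closed-form expressions $\mathsf{pst}$ in Tab.~\ref{tab:pensim} correctly sum, over all numbers of resets to zero, the probabilities of ending at a given penalty value after $w_{\product}(\sigma^{(j)})$ time units, and that substituting the marginal $\penexp$ for states outside the horizon or visibility window preserves the inequality $f(\sigma^{\star}) \leq f(\sigma_{\mathrm{cyc}})$ in the correct direction when taking expectation.
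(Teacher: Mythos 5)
Your proof is correct and takes essentially the same route as the paper's: you extend Proposition~\ref{th:limprob} to $\straton^V$ by observing that the candidate sets always contain a run induced by $\stratoff^V$, that the targets ($F_{\scc}$, the cycle $\cyc_{\scc}^V$, and each next surveillance state) are reached in finite time, and that any deviation from the offline choice only lowers the expected average, then conclude analogously to Theorem~\ref{th:offcorr}. The paper's own proof is a three-sentence sketch of exactly these points, so your version is simply a more detailed elaboration of the same argument.
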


\begin{proof}
First, we prove that Prop.~\ref{th:limprob} holds for the strategy $\straton^V$ as well. This result follows directly from the facts below. The set of finite runs we choose from always contains a finite run induced by the strategy $\stratoff^V$. Once the cycle $\cyc_{\scc}^V$ is reached, the system optimizes the finite run from one surveillance state of the cycle to the next, until it is reached after finite time. Finally, if the strategy $\straton^V$ does not follow $\stratoff^V$, it is only because the chosen finite run provides lower expected average. The correctness of the strategy $\straton$ is now proved analogously to the correctness of the strategy computed offline.
\end{proof}

\begin{proposition}
The strategy $\straton$ is with probability 1 expected to perform in the worst case as good as the strategy $\stratoff$ computed offline. That means, if the average cumulative penalty per surveillance cycle incurred in the so far performed run of the system is lower than the optimal APPC value $V_{\scc}^*$, it will rise slower under the strategy $\straton$ than under the strategy $\stratoff$. On the other hand, if the average cumulative penalty per surveillance cycle incurred in the so far performed run of the system is higher than the optimal APPC value $V_{\scc}^*$, it is expected to decrease faster under the strategy $\straton$ than under the strategy $\stratoff$.
\end{proposition}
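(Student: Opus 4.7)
The plan is to argue that at every decision point the candidate set from which $\straton$ chooses always contains the next transition dictated by $\stratoff$, and then to exploit the fact that $\straton$ selects the candidate minimizing the optimization function $f$ of Eq.~(\ref{eq:optf}). First I would verify the inclusion property for both phases: in the first phase, $\stratoff^{\phi}$ follows a minimum-weight finite run to $F_{\scc}$, so every transition along it is shortening with respect to $I_{F_{\scc}}$ and hence the corresponding tail lies in $\run_{\phi}(u_{i,k})$; in the second phase, an analogous argument using $I_{\cyc_{\scc}^V}$ and then $I_{c_b}$ (together with the length-bounded alternative in the definition of $\run_V(u_{i,k})$, which always admits the literal continuation of $\cyc_{\scc}^V$) shows that the finite run that $\stratoff^V$ would trace from the current state to the next surveillance state $c_b$ belongs to $\run_V(u_{i,k})$.

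Next I would interpret $f(\sigma)$ as an unbiased estimator of the expected average cumulative penalty per surveillance cycle in the current round when the continuation $\sigma$ is followed: the deterministic summands $\pen(u_{i,j},t_{i,j})$ contribute the already-incurred penalties, the $\pensim$ terms are defined in Tab.~\ref{tab:pensim} to equal either the (deterministic) projected value based on the currently sensed penalty when the state is visible and within horizon $h$, or else $\penexp(u_j)$. In either case $E[\pen(u_j,t_{i,k}+w_{\product}(\sigma^{(j)}))] = \pensim(u_j,t_{i,k}+w_{\product}(\sigma^{(j)}))$, so the expected value of the actual round-average under $\sigma$ coincides with $f(\sigma)$. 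Consequently, if $\sigma_{\mathrm{off}}$ is the candidate corresponding to $\stratoff$ and $\sigma_{\mathrm{on}}$ is the minimizer selected by $\straton$, we have $f(\sigma_{\mathrm{on}})\leq f(\sigma_{\mathrm{off}})$ surely, so the conditional expected round-average under $\straton$ is at most that under $\stratoff$, with probability $1$.

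Finally I would translate this one-step expected dominance into the two comparison statements. Let $A$ denote the current cumulative-per-cycle average and let $\Delta_{\mathrm{on}},\Delta_{\mathrm{off}}$ be the expected increments contributed by the next chosen continuation under $\straton$ and $\stratoff$ respectively; the monotonicity of the running average with respect to new samples gives that when $A<V_{\scc}^*$ the smaller expected increment $\Delta_{\mathrm{on}}\leq \Delta_{\mathrm{off}}$ produces a slower rise, and when $A>V_{\scc}^*$ it produces a faster decrease, in both cases with probability $1$. Combined with Theorem~\ref{th:oncorr}, which already guarantees convergence of the round-average to $V_{\scc}^*$, this yields the worst-case comparison asserted by the proposition.

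The main obstacle will be the second step: making precise the claim that $f(\sigma)$ is exactly the expected round-average under $\sigma$ despite the piecewise definition of $\pensim$ in Tab.~\ref{tab:pensim}. One must check that the sensed-and-horizon branch is in fact the conditional expectation given the observation, and that the $\mathsf{pst}(\cdot)$ terms correctly encode the probability of the penalty resetting (via $1-p(u_j)$) some number of times before the visit; only then does the $f(\sigma_{\mathrm{on}})\leq f(\sigma_{\mathrm{off}})$ inequality, which holds surely, transfer to a genuine inequality between the two conditional expectations of round-averages.
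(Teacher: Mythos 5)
Your proposal is correct and follows essentially the same route as the paper: the paper's proof is a one-line reference to the proof of Theorem~\ref{th:oncorr}, which rests on exactly the two facts you establish --- that the candidate sets $\run_{\phi}(u_{i,k})$ and $\run_V(u_{i,k})$ always contain the continuation induced by $\stratoff$, and that $\straton$ deviates only when the minimizer of $f$ has lower expected round-average. Your elaboration of the unbiasedness of $f$ via Tab.~\ref{tab:pensim} is more detail than the paper provides, but it is the right justification for the final expected-dominance step.
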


\begin{proof}
Follows from the proof of Theorem~\ref{th:oncorr}.
\end{proof}

\subsubsection*{Complexity}

The cardinality of the set of finite runs $\run_{\phi}(u_{i,k})$ grows exponentially with the minimum weight $w_{\product}^*(u_{i,k},F_{\scc})$. Analogously, the same holds for the set of finite runs $\run_{V}(u_{i,k})$ and the set $\cyc_{V}^*$ or one of its surveillance states. To simplify the computations and effectively use the algorithm in real time, one can use the following rule that was also applied in our implementation in Sec.\ref{sec:casestudy}. We put a threshold on the maximum weight of a finite run in $\run_{\phi}(u_{i,k})$ and $\run_{V}(u_{i,k})$. In the second phase of a round, when on the optimal cycle, we optimize the finite run from the state $c_a$ to the next surveillance state on the cycle $c_b$. However, if the weight of the fragment of the cycle from $c_a$ to $c_b$ is too high, we can first optimize the run to some intermediate state $c_b'$. Also, the complexity of one step of the strategy $\straton$ grows exponentially with the user-defined planning horizon $h$. Hence, $h$ should be chosen wisely. One should also keep in mind that the higher the planning horizon, the better local improvement.   



\section{Discussion}\label{sec:discussion}


Every LTL formula $\varphi$ over $\ap$ can be converted to a formula $\phi$ of the form in Eq.~(\ref{eq:ltlsur}) for which it holds that a run of the TS $\dts$ satisfies $\phi$ if and only if it satisfies $\varphi$. The formula is $\phi=\varphi \, \wedge \, \G\F\,\pisur$ where $\pisur \in L(s)$ for every $s\in S$. In that case, Problem~\ref{pf:appc} requires to minimize the expected average penalty incurred per stage. 

The algorithms presented in Sec.~\ref{sec:solution} can be used to correctly solve Problem~\ref{pf:appc} also for the systems with different penalty dynamics than the one defined in Sec.~\ref{sec:pf}.
However, for every state we need to be able to compute the expected value of the penalty in the state, like in Eq.~(\ref{eq:penexp}). 
For the online algorithm we also require that the dynamics of penalties allows to simulate them for a finite number of time
units. More precisely, if we observe the penalty in a state $s\in S$ in time $t$, we can compute the simulated expected value of the penalty in $s$ in every following time unit, up to $h$ time units, based only on the observed value.


The online algorithm from Sec.~\ref{subsec:appcon} is a heuristic. The sets of finite runs $\run_{\phi}(u_{i,k}),\run_V(u_{i,k})$ can be defined differently 
according to the properties of the actual problem. To guarantee the correctness of the strategy $\straton$, the sets must satisfy the following conditions. There always exists a finite run in the set minimizing the optimization function $f$ in Eq.~(\ref{eq:optf}). The definition of the set $\run_{\phi}(u_{i,k})$ guarantees that an accepting state from $F_{\scc}$ is visited after finite number of steps. The definition of $\run_{V}(u_{i,k})$ also guarantees a visit of the cycle $\cyc_{\scc}^V$ in finite time and moreover, Prop.~\ref{th:limprob} holds for the resulting strategy $\straton^V$.


\section{Case study}\label{sec:casestudy}

We implemented the framework developed in this paper for a persistent surveillance robotics example in Java~\cite{tool}. In this section, we report on the simulation results. 

We consider a mobile robot in a grid-like partitioned environment modeled as a TS depicted in Fig.~\ref{subfig:dts}. The robot transports packages between two stocks, marked green in Fig.~\ref{subfig:dts}. The blue state marks the robot's base location. The penalties in states are defined by rate $r=5$ and penalty probability function in Fig.~\ref{subfig:prob}. The visibility range $\vis$ is 6. For example, in Fig.~\ref{subfig:dts} the set $\Vis(s)$ of states visible from the current state $s$, with corresponding penalties, is depicted as the blue-shaded area. We set the planning horizon $h=9$.

\newsavebox{\myimage}

\begin{figure}[t]
\centering
\savebox{\myimage}{\scalebox{0.245}{\input{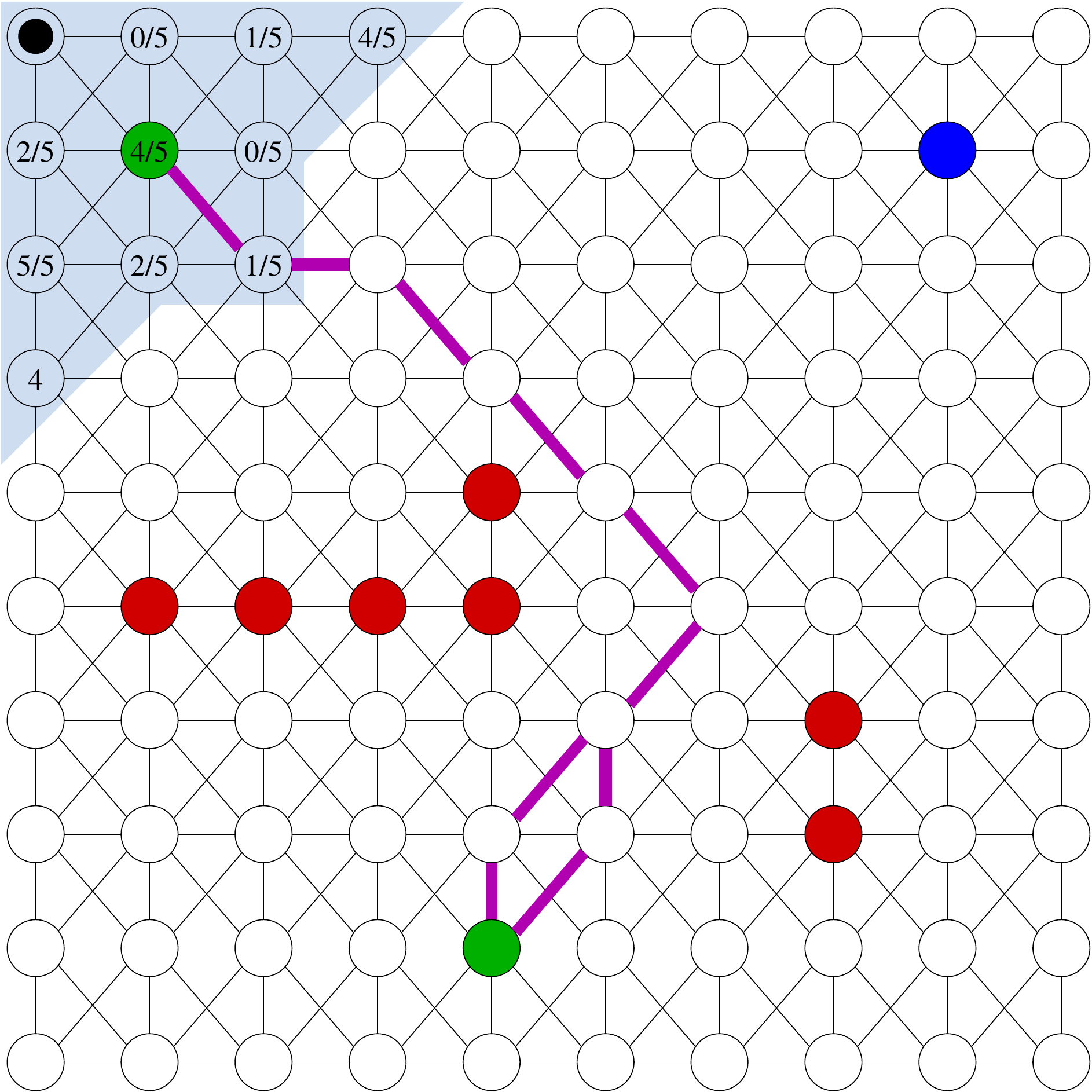_t}}}
\subfloat[]{\label{subfig:dts}\usebox{\myimage}} \quad
\subfloat[]{\label{subfig:prob}\raisebox{\dimexpr.5\ht\myimage-.5\height\relax}{\scalebox{0.3}{\input{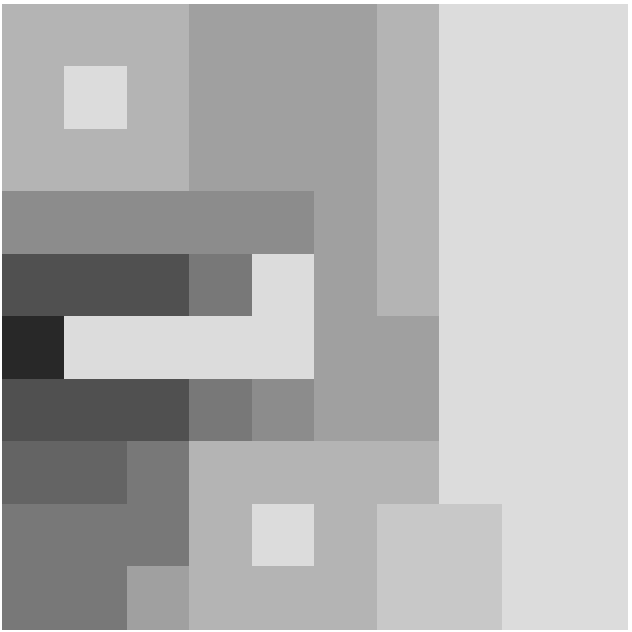_t}}}}
\label{fig:example}
\caption{(a) A TS modeling the robot (black dot) motion in a partitioned environment. 
Two stock locations are in green, 
a base 
is shown in blue, and unsafe locations 
are in red. There is a transition between vertically, horizontally or diagonally neighboring states. 
The weight of a horizontal and vertical transition is 2, for a diagonal transition it is 3. 
(b) The penalty probabilities in states. Darker shade indicates higher probability.}
\end{figure}

The mission for the robot is to transport packages between the two stocks (labeled with propositions $a$, and $b$, respectively) and infinitely many times return to the base (labeled with proposition $c$). The red states in Fig.~\ref{subfig:dts} are dangerous locations (labeled with $u$) which are to be avoided. At the same time, we wish to minimize the cumulative penalty incurred during the transport of a package, \ie the surveillance property $\pisur$ is true in both stock states.
The corresponding LTL formula is
\begin{align*}
& \G\,\big(a \Rightarrow\X\, (\neg a\,\U\,b)\big) \ \wedge \
\G \, \big(b\Rightarrow \X \, (\neg b\,\U\,a )\big)\ \wedge \\
& \G\F\, c \ \wedge \ \G(\neg u) \ \wedge \ \G\,\F\,\pisur,
\end{align*}
and the B\"{u}chi automaton has 10 states.
The cycle providing the minimum expected average cumulative penalty per surveillance cycle is depicted in magenta in Fig.~\ref{subfig:dts} and the optimal APPC value is $5.4$. 

We ran both offline and online algorithm for multiple rounds starting from the base state. In Fig.~\ref{fig:plots} we report on the results for 20 rounds, for more results see~\cite{tool}. As illustrated in Fig.~\ref{fig:plots}a, the average cumulative penalty per surveillance cycle incurred in the run induced by the offline strategy is above the optimal value and converges to it fairly fast. For the run induced by the online strategy, the average is significantly below the minimum APPC value due to the local improvement based on local sensing. On the other hand, Fig.~\ref{fig:plots}b shows the average cumulative penalty per surveillance cycle incurred in each round separately. The number of surveillance cycles performed in the second phase of every round $i$ of the offline strategy was less than $i\cdot k_i$, \ie the second phase always ended due to the fact that the average incurred in the round was below the threshold $V^*_{\scc}+\tfrac{2}{i}$. The maximum number of surveillance cycles performed in the second phase of a round was $7$. The same is true for the online strategy and the maximum number of surveillance cycles in the second phase of a round was $3$. For both algorithms, the number of surveillance cycles in the second phase of a round does not evolve monotonically, rather randomly. Hence we conclude that in every round $i$ we unlikely need to compute the value $j(\tfrac{1}{i})$. 

\begin{figure}[t]
\centering
\begin{tabular}{l l}
{\footnotesize\raisebox{1cm}{(a)}} & \includegraphics[width=0.89\linewidth]{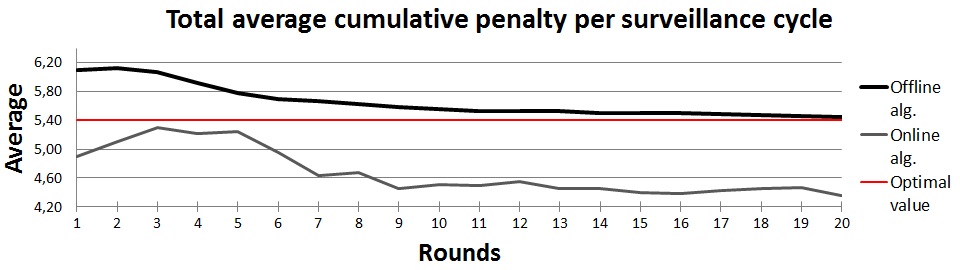} \\
{\footnotesize\raisebox{1cm}{(b)}} & \includegraphics[width=0.89\linewidth]{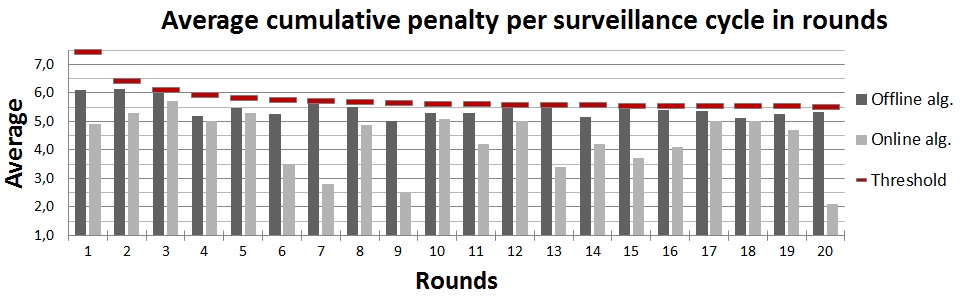}
\end{tabular}
\caption{(a) The average cumulative penalty per surveillance cycle incurred during the runs, shown at the end of each round. The red line marks the optimal APPC value. (b) The average cumulative penalty per surveillance cycle incurred in every round. The red bars indicate the threshold $V_{\scc}^*+\tfrac{2}{i}$.}
\label{fig:plots}
\end{figure}



\bibliographystyle{IEEEtran}
\bibliography{dts_acps_acpc_rh}

\end{document}